\documentclass{article}
\usepackage[final]{nips_2017}
\usepackage[utf8]{inputenc} 
\usepackage[T1]{fontenc}    
\usepackage{hyperref}       
\usepackage{url}            
\usepackage{booktabs}       
\usepackage{amsfonts}       
\usepackage{nicefrac}       
\usepackage{microtype}      

\usepackage{amsopn}
\usepackage{amsthm}
\usepackage{etex}
\usepackage{graphicx}
\usepackage{endnotes}
\usepackage{hyperref}
\usepackage{epsfig,psfrag}
\usepackage{pst-all}
\usepackage{amssymb,amsfonts,upref,cite,epsf,color,bm}
\usepackage{graphicx}
\usepackage{color}
\usepackage{amsmath}
\usepackage{graphicx}
\usepackage{calc}
\usepackage{booktabs}
\usepackage{tikz}
\usepackage{pgfplots}
\newcommand\defeq{:=}

\usepackage{algorithm}
\usepackage{algpseudocode}
\floatname{algorithm}{Algorithm}
\algnewcommand\algorithmicinput{\textbf{Input:}}
\algnewcommand\INPUT{\item[\algorithmicinput]}
\algnewcommand\algorithmicoutput{\textbf{Output:}}
\algnewcommand\OUTPUT{\item[\algorithmicoutput]}

\DeclareMathOperator*{\argmin}{arg\;min}

\newcommand\vect[1]{\mathbf #1}
\newcommand{\va}{\vect{a}}

\newcommand{\vd}{\vect{d}}

\newcommand{\vx}{\vect{x}}  
\newcommand{\vy}{\vect{y}}  
\newcommand{\vz}{\vect{z}}

\newcommand{\mC}{\mathbf{C}}

\newcommand{\mW}{\mathbf{W}}
\newcommand{\measlen}{M}

\newcommand{\signalsize}{N}
\newcommand{\noise}{\bm{\varepsilon}}

\newcommand{\edges}{\mathcal{E}}
\newcommand{\cluster}{\mathcal{C}}
\newcommand{\nodes}{\mathcal{V}}
\newcommand{\graph}{\mathcal{G}}
\newcommand{\samplingset}{\mathcal{M}}

\newcommand{\sigdim}{p}
\newcommand{\flow}{h}
\newcommand{\partition}{\mathcal{F}}

\newcommand{\compbound}{\overline{\partial \partition}}

\newtheorem{theorem}{Theorem}
\newtheorem{definition}[theorem]{Definition}
\newtheorem{lemma}[theorem]{Lemma}


 

\newcommand{\edge}[2]{\{#1,#2\}}
\usepackage{setspace}

\title{When is Network Lasso Accurate: The Vector Case}

%
\author{ Nguyen Tran, Saeed Basirian, 
  Alexander Jung  \\
  Department of Computer Science\\
  Aalto University\\
  \texttt{firstname.lastname(at)aalto.fi} \\
}

\begin{document}
	\maketitle
\begin{abstract}
A recently proposed learning algorithm for massive network-structured data sets (big data over networks) is the network 
Lasso (nLasso), which extends the well-known Lasso estimator from sparse models to network-structured datasets. 
Efficient implementations of the nLasso have been presented using modern convex optimization methods. 
In this paper we provide sufficient conditions on the network structure and available label information such that nLasso accurately 
learns a vector-valued graph signal (representing label information) from the information provided 
by the labels of a few data points.  
\end{abstract}

\section{Introduction}
\label{sec_intro}

We consider datasets which are represented by a ``data graph''. The nodes of the data graph represent 
individual data points (e.g., one image out of a image collection) which are are connected by edges according to some 
notion of similarity. This similarity might be induced naturally by the application at hand (e.g., in social networks) or obtained 
from statistical models (probabilistic graphical models) \citep{gmsIcassp2017,gms2017}. 
Beside graph structure, the datasets typically carry label information which we represent by a graph signal \citep{SandrMoura2014}. 

The acquisition of graph signal values (labels) is often expensive, and therefore we are interested in methods for learning the 
entire graph signal from a (small) subset of nodes (sampling set), which is a crucial task in many machine learning problems. 
The learning of the graph signals from a small number of signal samples, which are obtained by manually labelling 
few data points, is enabled by exploiting the tendency of natural graph signals to be smooth. 
More precisely, the smoothness hypothesis, which underlies most (semi-) supervised machine learning methods 
\citep{BishopBook,SemiSupervisedBook}, requires the graph signal to be nearly constant over well connected subset of nodes (clusters). 


In this paper, by extending the program initiated in \citep{WhenIsNLASSO} for scalar graph signals, 
we present sufficient conditions on the network topology and available label information such that the 
nLasso can recover an underlying vector-valued graph signal. In particular, we extend the network 
compatibility condition (NCC) introduced in \citep{WhenIsNLASSO} to vector-valued graph signals. 
The NCC ensures accurate recovery of a smooth vector-valued graph signal from only few signal values (initial labels) 
using nLasso. We then relate the NCC to the existence of certain network flows.




\section{Problem Formulation}
\label{sec_setup}

We consider a graph signal defined over an undirected graph $\graph\!=\!(\nodes,\edges)$, with nodes $\nodes$ 
representing individual data points and undirected edges $\edges$ encoding domain-specific notions of similarity between data points. 
The strength of the connections $\edge{i}{j} \in \edges$ is quantified by non-negative edge weights $W_{ij}$,  
which we collect into a weighted adjacency matrix $\mW \in \mathbb{R}_{+}^{\signalsize \times \signalsize}$ (which is also known as the graph shift matrix \citep{Chen2015}). 

In addition to the graph structure $\graph$, datasets typically convey additional information, e.g., features, labels or 
model parameters associated with individual data points $i \in \nodes$. We represent this additional 
information as a graph signal $\vx[\cdot]: \nodes \rightarrow \mathbb{R}^{\sigdim}$, which maps the node $i\!\in\!\nodes$ 
to the signal vector $\vx[i] \!=\! (x_1[i], \ldots , x_{\sigdim}[i])^T \!\in\! \mathbb{R}^{\sigdim}$. The graph signal vector $\vx[i]$ might represent, 
e.g., the weight vector for a local pricing model in a house prize prediction application (cf.\ \citep{NetworkLasso}). 

The graph signals $\vx[\cdot]$ encountered in many application domains are smooth, i.e., have small \emph{total variation} (TV) 
\begin{equation*} 
\label{equ_def_TV}
\| \vx[\cdot] \|_{\edges} \defeq \sum_{\{i,j\} \in \edges} W_{i,j}  \| \vx[j]\!-\!\vx[i]\|_2 . 
\end{equation*}

Our analysis employs a simple model for smooth graph signals, i.e., clustered graph signals
\begin{equation}
\label{equ_def_clustered_signal_model}
 \vx[i] \!=\! \sum_{\cluster \in \partition} \va_{\cluster} \mathcal{I}_{\cluster}[i],
\end{equation} 
with vectors $\va_{\cluster} \in \mathbb{R}^{\sigdim}$ and the indicator signal $\mathcal{I}_{\cluster}[i] \in \{0,1\}$ 
for subset $\cluster \in \nodes$, i.e., $\mathcal{I}_{\cluster}[i] = 1$ if and only if $i \in \cluster$. The model \eqref{equ_def_clustered_signal_model} 
involves a partition $\partition = \{\cluster_{1}, \ldots, \cluster_{|\partition|}\}$ of $\nodes$ into disjoint subsets $\cluster_{l}$.

While our analysis formally applies to any partition $\partition = \{ \cluster_{1},\ldots,\cluster_{|\mathcal{\partition}|} \}$ used in the model \eqref{equ_def_clustered_signal_model}, 
our results are most useful if the partition conforms with the ``intrinsic (cluster) structure'' of the data graph $\graph$. 
In particular, consider a partition $\partition$ such that the total weight of the cluster boundaries 
\begin{equation*}
\partial \partition = \{ \edge{i}{j} \in \edges: i \in \cluster, j \in \cluster' (\neq \cluster) \}
\end{equation*} 
is small compared to the total weight of intra-cluster edges $\overline{\partial \partition}= \edges \setminus \partial \partition$, i.e., 
$\sum_{\edge{i}{j} \in \partial \partition } W_{i,j} \ll \sum_{\edge{i}{j} \in \overline{\partial \partition}}W_{i,j}$. As can verified easily, for such a 
partition, any signal of the form \eqref{equ_def_clustered_signal_model} is then smooth in the sense of having small TV $\| \vx[\cdot]\|_{\edges}$.




The clustered graph signal is different from the model of band-limited graph signals which is championed in graph signal processing \citep{ChenVarma2015}. 
Indeed, while band-limited graph signals have sparse graph Fourier transform (GFT) coefficients, clustered graph signals 
\eqref{equ_def_clustered_signal_model} have dense GFT coefficients which are spread out over the entire frequency range \citep{WhenIsNLASSO}.


We assume to have access to the graph signal values $\vx[i]$ only for (small) subset of nodes, i.e., the sampling set 
$\samplingset \defeq \{i_{1},\ldots,i_{\measlen}\} \subseteq \nodes$ (typically $|\samplingset| \ll |\nodes|$). 
In particular, we observe 
\begin{equation}
\label{equ_model_initial_labels}
\vy[i] = \vx[i] + \noise[i] \mbox{ for a sampled node } i \in \samplingset. 
\end{equation} 
The error component $\noise[i]$ in \eqref{equ_model_initial_labels} covery any data curation or labelling errors. 

In order to be able to learn the entire  graph signal $\vx[\cdot]$ from partial noisy measurements $\{ \vy[i] \}_{i\!\in\!\samplingset}$ 
we exploit that the true graph signal is smooth, i.e., have small TV $\| \tilde{\vx}[\cdot] \|_{\edges}$. 
Moreover, any reasonable learning algorithm should deliver a graph signal with a small empirical error 
\begin{equation}
\label{equ_def_emp_error}
\widehat{E}(\hat{\vx}[\cdot]) \defeq \sum_{i \in \samplingset} \| \hat{\vx}[i] - \vy[i]\|_1.
\end{equation}
Note that we use the $\ell_1$-norm for the empirical error $\widehat{E}(\hat{\vx}[\cdot])$ \eqref{equ_def_emp_error}, 
which is different from the original lasso, where the squared $\ell_2$-norm was used \citep{BuhlGeerBook}. 

A straightforward recovery method aiming to a small TV $\| \hat{\vx}[\cdot] \|_{\edges}$ and small empirical error $\widehat{E}(\hat{\vx}[\cdot])$ can be formulated as a regularized optimization problem
\begin{align} 
\hat{\vx}[\cdot] & \in \argmin_{\tilde{\vx}[\cdot]:\tilde{\vx}[i] \in \mathbb{R}^{\sigdim}} \widehat{E}(\tilde{\vx}[\cdot])  + \lambda \| \tilde{\vx}[\cdot] \|_{\edges}.  \label{equ_semi_sup_learning_problem}
\end{align}
The tuning parameter $\lambda$ in \eqref{equ_semi_sup_learning_problem} trades off a small empirical error 
$\widehat{E}(\hat{\vx}[\cdot])$ against signal smoothness $\| \hat{\vx}[\cdot] \|_{\edges}$ of the learned signal $\hat{\vx}[\cdot]$. 
A small value of  $\lambda$ enforces the solutions of \eqref{equ_semi_sup_learning_problem} to obtain a s
mall empirical error, whereas, a large value of $\lambda$ enforces the solutions of \eqref{equ_semi_sup_learning_problem} to obtain a small TV, i.e. to be smooth.
The recovery problem \eqref{equ_semi_sup_learning_problem} is a convex problem and can be approached by modern convex optimization 
methods \citep{JungSpawc2016,HannakAsilomar2016,JungHero2016}. 

\vspace*{-2mm}
\section{When is Network Lasso Accurate?} 
\vspace*{-2mm}
\label{sec_main_results} 

We now introduce the network compatibility condition (NCC), which generalizes the 
compatibility conditions for Lasso type estimators \citep{BuhlGeerBook} of ordinary sparse signals. 
Our main contribution is to show that the NCC guarantees any solutions of 
\eqref{equ_semi_sup_learning_problem} allows to accurately learn the true underlying graph signal. 

\begin{definition} 
\label{def_NNSP}
Consider a data graph $\graph = (\nodes, \edges)$ with a particular partition $\partition$ of its nodes $\nodes$. 
A sampling set $\samplingset \subseteq \nodes$ is said to satisfy NCC with constants $K,L>0$, if 
\vspace*{-1mm}
\begin{equation}
\label{equ_ineq_multcompcondition_condition}
K \sum_{i \in \samplingset} \| \vz[i] \|_2 +  \| \vz[\cdot] \|_{\compbound} \geq (L/\sqrt{\sigdim})  \| \vz[\cdot] \|_{\partial \partition} 
\vspace*{-1mm}
\end{equation} 
for any  graph signal $\vz[\cdot]$. 
\end{definition} 

It turns out that, if the sampling set satisfies the NCC, any solution of \eqref{equ_semi_sup_learning_problem} provides an 
accurate estimate of the true underlying graph signal \eqref{equ_def_clustered_signal_model}. 
\begin{theorem} 
\label{lem_NSP1}
Consider a data set represented by data graph $\graph$ and a graph signal $\vx[\cdot]$ of the form \eqref{equ_def_clustered_signal_model}. 
If the sampling set $\samplingset$ satisfies NCC with parameters $L > \sqrt{\sigdim}$ and $K > 0$, then any solution $\hat{\vx}[\cdot]$ of \eqref{equ_semi_sup_learning_problem} with $\lambda \defeq 1/K$ satisfies  
\begin{equation}
 \| \hat{\vx}[\cdot]-\vx[\cdot] \|_{\edges} \!\leq\! K (1\!+\!4\sqrt{\sigdim}/(L\!-\!\sqrt{\sigdim}))  \sum_{i \in \samplingset} \|\noise[i]\|_1 .
 \label{equ_bound_error_TV}
\end{equation} 
\end{theorem}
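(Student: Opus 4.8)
The plan is to run the standard compatibility-condition argument, adapted to the $\ell_1$ empirical error and the vector-valued TV. Write $\vz[\cdot] \defeq \hat{\vx}[\cdot] - \vx[\cdot]$ for the estimation error; the quantity to be bounded is $\|\vz[\cdot]\|_{\edges}$. First I would exploit optimality: since $\hat{\vx}[\cdot]$ minimizes \eqref{equ_semi_sup_learning_problem} and $\vx[\cdot]$ is feasible, $\widehat{E}(\hat{\vx}[\cdot]) + \lambda\|\hat{\vx}[\cdot]\|_{\edges} \leq \widehat{E}(\vx[\cdot]) + \lambda\|\vx[\cdot]\|_{\edges}$. Using $\vy[i] = \vx[i] + \noise[i]$ gives $\widehat{E}(\vx[\cdot]) = \sum_{i\in\samplingset}\|\noise[i]\|_1$ and $\widehat{E}(\hat{\vx}[\cdot]) = \sum_{i\in\samplingset}\|\vz[i] - \noise[i]\|_1$, so one reverse triangle inequality on the data-fit term isolates $\sum_{i\in\samplingset}\|\vz[i]\|_1$ up to an additive $2\sum_{i}\|\noise[i]\|_1$.

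The second step is to split the TV over the boundary edges $\partial\partition$ and the interior edges $\compbound$ and to use the clustered structure \eqref{equ_def_clustered_signal_model}. Because $\vx[\cdot]$ is constant on each cluster it has no variation across interior edges, so $\|\vx[\cdot]\|_{\compbound}=0$ and $\|\hat{\vx}[\cdot]\|_{\compbound} = \|\vz[\cdot]\|_{\compbound}$; on the boundary edges the reverse triangle inequality gives $\|\hat{\vx}[\cdot]\|_{\partial\partition} \geq \|\vx[\cdot]\|_{\partial\partition} - \|\vz[\cdot]\|_{\partial\partition}$. Substituting these into the optimality inequality, the term $\lambda\|\vx[\cdot]\|_{\partial\partition}$ cancels and, after fixing $\lambda = 1/K$, I obtain the key cone-type bound $\sum_{i\in\samplingset}\|\vz[i]\|_1 + \lambda\|\vz[\cdot]\|_{\compbound} \leq 2\sum_{i}\|\noise[i]\|_1 + \lambda\|\vz[\cdot]\|_{\partial\partition}$.

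The third step turns this into a bound on $\|\vz[\cdot]\|_{\partial\partition}$ via the NCC. Multiplying the cone bound by $K$ and using $\|\vz[i]\|_2 \leq \|\vz[i]\|_1$ produces $K\sum_{i}\|\vz[i]\|_2 + \|\vz[\cdot]\|_{\compbound} \leq 2K\sum_i\|\noise[i]\|_1 + \|\vz[\cdot]\|_{\partial\partition}$, whose left-hand side is exactly the quantity lower-bounded in \eqref{equ_ineq_multcompcondition_condition}; combining the two yields $(L/\sqrt{\sigdim}-1)\|\vz[\cdot]\|_{\partial\partition} \leq 2K\sum_i\|\noise[i]\|_1$. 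Here the hypothesis $L > \sqrt{\sigdim}$ is what makes the coefficient positive and gives $\|\vz[\cdot]\|_{\partial\partition} \leq (2K\sqrt{\sigdim}/(L-\sqrt{\sigdim}))\sum_i\|\noise[i]\|_1$. Finally, discarding the nonnegative data-fit term in the optimality inequality gives $\|\vz[\cdot]\|_{\compbound} \leq K\sum_i\|\noise[i]\|_1 + \|\vz[\cdot]\|_{\partial\partition}$, and adding this to the boundary bound through $\|\vz[\cdot]\|_{\edges} = \|\vz[\cdot]\|_{\partial\partition} + \|\vz[\cdot]\|_{\compbound}$ assembles \eqref{equ_bound_error_TV}.

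I expect the main obstacle to be the careful bookkeeping of the two mismatched norms: the empirical error is measured in $\ell_1$, whereas both the TV and the NCC are stated with the $\ell_2$ node norm and carry the dimensional factor $\sqrt{\sigdim}$. Keeping the inequalities tight enough that the final constant comes out as $K(1 + 4\sqrt{\sigdim}/(L-\sqrt{\sigdim}))$ rather than something looser requires using $\|\vz[i]\|_2 \leq \|\vz[i]\|_1$ in exactly the right place and letting all the dimensional dependence flow through the NCC itself. The algebraic cancellation of $\lambda\|\vx[\cdot]\|_{\partial\partition}$ and the alignment of coefficients both hinge on the specific choice $\lambda = 1/K$, so I would fix that choice early.
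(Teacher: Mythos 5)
Your proposal is correct and follows essentially the same route as the paper: the optimality inequality, the split of the TV into $\partial\partition$ and $\compbound$ using $\|\vx[\cdot]\|_{\compbound}=0$, the cone-type bound, the NCC applied to the error signal to control $\|\vz[\cdot]\|_{\partial\partition}$, and the final reassembly via $\|\vz[\cdot]\|_{\edges}=\|\vz[\cdot]\|_{\compbound}+\|\vz[\cdot]\|_{\partial\partition}$. The only cosmetic difference is that you pass from $\|\vz[i]\|_1$ to $\|\vz[i]\|_2$ when invoking the NCC rather than immediately after the reverse triangle inequality, which changes nothing in the constants.
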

\begin{proof}
Consider an arbitrary solution $\hat{\vx}[\cdot]$ of \eqref{equ_semi_sup_learning_problem} and denote the difference 
between $\hat{\vx}[\cdot]$ and the true underlying clustered signal $\vx[\cdot]$ as $\tilde{\vx}[\cdot] \defeq \hat{\vx}[\cdot] - \vx[\cdot]$. 
By \eqref{equ_semi_sup_learning_problem},
\begin{equation}
\label{equ_inequ_basic_1}
\hspace{-1mm}\sum_{i \in \samplingset} \|\hat{\vx}[i]- \vy[i]\|_1 \!+\! \lambda \| \hat{\vx}[\cdot] \|_{\edges}  \!\leq\! \sum_{i \in \samplingset} \|\noise[i]\|_1 \!+\!  \lambda \| \vx[\cdot] \|_{\edges}.\hspace{-1mm}
\end{equation} 
Since the true graph signal $\vx[\cdot]$ satisfies \eqref{equ_def_clustered_signal_model}, we have 
$\| \vx[\cdot] \|_{\compbound}=0$ and $\| \tilde{\vx}[\cdot] \|_{\compbound}=\| \hat{\vx}[\cdot] \|_{\compbound}$. 
Combining the decomposition property and triangle inequality for the semi-norm $\| \cdot \|_{\edges}$ with \eqref{equ_inequ_basic_1}, 
\begin{align}
\sum_{i \in \samplingset} \|\hat{\vx}[i] \!-\! \vy[i]\|_1 + \lambda \| \hat{\vx}[\cdot] \|_{\compbound}  &\leq  \sum_{i \in \samplingset} \|\noise[i]\|_1 + \lambda \| {\vx}[\cdot] \|_{\partial \partition}  - \lambda \| \hat{\vx}[\cdot] \|_{\partial \partition} \nonumber\\
\Rightarrow \sum_{i \in \samplingset} \|\hat{\vx}[i] \!-\! \vy[i]\|_1 + \lambda \| \tilde{\vx}[\cdot] \|_{\compbound}  &\leq \sum_{i \in \samplingset} \|\noise[i]\|_1 + \lambda \| \tilde{\vx}[\cdot] \|_{\partial \partition}.
\label{equ_inequ_basic_2}
\end{align}
By triangle inequality, 
\begin{align}
\sum_{i \in \samplingset} \|\hat{\vx}[i] \!-\! \vy[i]\|_1 \stackrel{\eqref{equ_model_initial_labels}}{=} \sum_{i \in \samplingset} \|\hat{\vx}[i] \!-\! \vx[i] - \noise[i] \|_1
\geq  \sum_{i \in \samplingset} \|\tilde{\vx}[i]\|_2 \!-\! \sum_{i \in \samplingset} \|\noise[i]\|_1, \nonumber
\end{align}
where we have used $\|\tilde{\vx}[i]\|_1 \geq \|\tilde{\vx}[i]\|_2$. Therefore,
\begin{align}
\label{equ_inequ_basic_sampleset}
\hspace{-2mm}\max\{0, \sum_{i \in \samplingset} \|\tilde{\vx}[i]\|_2 \!-\! \sum_{i \in \samplingset} \|\noise[i]\|_1\} \!\leq\! \sum_{i \in \samplingset} \|\hat{\vx}[i] \!-\! \vy[i]\|_1.
\end{align}
Applying \eqref{equ_inequ_basic_sampleset} into \eqref{equ_inequ_basic_2} yields 
\begin{align}
\label{equ_upper_bound_complement_partition}
\lambda \| \tilde{\vx}[\cdot] \|_{\compbound}  &\leq \sum_{i \in \samplingset} \|\noise[i]\|_1 + \lambda \| \tilde{\vx}[\cdot] \|_{\partial \partition},
\end{align}
and
\begin{align}
\hspace{-3mm} \sum_{i \in \samplingset} \|\tilde{\vx}[i]\|_2 \!+\! \lambda \| \tilde{\vx}[\cdot] \|_{\compbound}  \leq 2 \sum_{i \in \samplingset} \|\noise[i]\|_1 \!+\! \lambda \| \tilde{\vx}[\cdot] \|_{\partial \partition}.\hspace{-1mm}
\label{equ_inequ_basic_3}
\end{align}
Since we assume NNC holds for $\samplingset$, inequality \eqref{equ_ineq_multcompcondition_condition} applies to $\tilde{\vx}[\cdot]$, i.e.,
\begin{equation} 
\label{equ_inequ_diff_signal}
(1/K) (L/\sqrt{\sigdim}) \|\tilde{\vx}[\cdot]  \|_{\partial \partition} \leq \sum_{i \in \samplingset} \|\tilde{\vx}[i] \|_2 +  (1/K) \| \tilde{\vx}[\cdot] \|_{\compbound}. 
\end{equation} 
Inserting \eqref{equ_inequ_diff_signal} into \eqref{equ_inequ_basic_3} and using $\lambda \defeq 1/K$, yields 
\begin{equation}
\label{equ_upper_bound_partial_partition}
\lambda(L/\sqrt{\sigdim}-1)\| \tilde{\vx}[\cdot] \|_{\partial \partition} \leq 2  \sum_{i \in \samplingset} \|\noise[i]\|_1.
\vspace*{-2mm}
\end{equation}  
Combining \eqref{equ_upper_bound_complement_partition} with \eqref{equ_upper_bound_partial_partition} yields  
\begin{equation*} 
\hspace*{-10mm}\| \tilde{\vx}[\cdot] \|_{\edges}  \!=\!\| \tilde{\vx}[\cdot] \|_{\compbound}\!+\!\| \tilde{\vx}[\cdot] \|_{\partial \partition} 
 \stackrel{\eqref{equ_upper_bound_complement_partition}}{\leq}   \hspace*{-1mm} \frac{1}{\lambda} \sum_{i \in \samplingset} \|\noise[i]\|_1 \!+\! 2 \| \tilde{\vx}[\cdot] \|_{\partial \partition}
  \stackrel{\eqref{equ_upper_bound_partial_partition}}{\leq}   \hspace*{-1mm} ( \frac{1}{\lambda} \!+\!\frac{4\sqrt{\sigdim}/\lambda}{(L\!-\!\sqrt{\sigdim})}) \hspace*{-1mm} \sum_{i \in \samplingset}\hspace*{-1mm} \|\noise[i]\|_1.
\vspace*{-6mm}
\end{equation*} 
\vspace*{-3mm}
\end{proof} 

We highlight that the nLasso \eqref{equ_semi_sup_learning_problem} does not require the partition $\partition$ used for 
our signal model \eqref{equ_def_clustered_signal_model}. This partition is only used for the analysis of nLasso \eqref{equ_semi_sup_learning_problem}. 
Moreover, if the true underlying graph signal is of the form \eqref{equ_def_clustered_signal_model} and nLasso accurately learns this signal (Theorem \ref{lem_NSP1}), 
we can obtain the partition $\partition$ by thresholding the graph signal differences $\| \vx[i] \!-\! \vy[i] \|$ for $\edge{i}{j} \in \edges$ \citep{TrendGraph}. 

The bound \eqref{equ_bound_error_TV} characterizes the recovery error in terms of the semi-norm 
$\| \hat{\vx}[\cdot]-\vx[\cdot] \|_{\edges}$, and in general does not imply a small mean squared error. However, 
if $\| \hat{\vx}[\cdot]-\vx[\cdot] \|_{\edges}$ is small, we can identify the edges $\edge{i}{j}$ having 
large $\| \hat{\vx}[i]-\hat{\vx}[j] \|_2$ to obtain underlying clusters $\cluster_{l}$ (cf.\ \eqref{equ_def_clustered_signal_model}). 

Our second main contribution, beside Theorem \ref{lem_NSP1}, is to relate the NCC (cf.\ Definition \ref{def_NNSP}) to 
the network structure of the data graph $\graph$ via the existence of certain network flows \citep{KleinbergTardos2006}. 

Let us denote the neighborhood of node $i$ by $\mathcal{N}(i) \!\defeq\! \{ j \!\in\! \nodes: \edge{i}{j} \in \edges \}$ and $[p]\!\defeq\!\{1,2, \ldots p\}$.

\begin{definition} 
Consider a graph $\graph = (\nodes, \edges)$ with capacity matrix  $\mC \in \mathbb{R}_{+}^{\signalsize \times \signalsize}$. 
A flow with demands $\vd[i] \in \mathbb{R}^{\sigdim}$, for $i \in \nodes$, is a mapping $\flow[\cdot]: \nodes\times \nodes \rightarrow \mathbb{R}^{\sigdim}$ satisfying, for any $k \in [p]$, 
\vspace*{-1mm}
\begin{equation*} 
\hspace*{-6mm}\sum_{j \in \mathcal{N}(i)}\hspace*{-2mm} \flow_k(i,j) = d_k[i] \mbox{, for any }  i \!\in\! \nodes \mbox{, and }|\flow_k(i,j)| \leq C_{i,j} \mbox{ for any edge } \{i,j\} \!\in\!  {\edges}. 
\vspace*{-3mm}
\end{equation*} 
\end{definition} 

We can characterize the network topology by verifying the existence of certain network flows. In particular, the next 
results relates the existence of certain network flows with the NCC. 
\begin{lemma} 
\label{lem_NNSP_samplingset_suff_recovery}
Consider a dataset with data graph $\graph = (\nodes, \edges)$, whose nodes are partitioned into clusters 
$\partition$, capacity matrix  $\mC \in \mathbb{R}_{+}^{\signalsize \times \signalsize}$ with $C_{i,j} = W_{i,j}$ 
for all edges $\{i,j\} \!\in\!  \compbound$ and $C_{i,j} =L W_{i,j}$ for $\{i,j\} \!\in\!  \partial \partition$, and a sampling set $\samplingset$. 
If there exists, for any graph signal $\vz[\cdot]$ and any $k \in [\sigdim]$, a flow $\flow_{k}[\cdot]$ on 
${\graph}$ with $\flow_{k}(i,j)= {\rm sign} ( z_k[i] \!-\! z_k[j]) L \cdot W_{i,j} \mbox{ for } \{i,j\}\!\in\! \partial \partition$ 
and demands $|d_k[i]|  \leq K$ for every node $i\!\in\! \samplingset$ and $d_k[i] \!=\! 0$ for every node $i \!\in\!  \nodes \setminus \samplingset$, 
then $\samplingset$ satisfies the network compatibility condition with parameters $K,L >0$.
\end{lemma}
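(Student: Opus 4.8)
The plan is to prove the NCC inequality \eqref{equ_ineq_multcompcondition_condition} one coordinate at a time and then aggregate across the $\sigdim$ components. First I would fix an arbitrary graph signal $\vz[\cdot]$ and a coordinate $k \in [\sigdim]$, and let $\flow_k[\cdot]$ be the flow guaranteed by the hypothesis. The central tool is a discrete summation by parts: using flow conservation $\sum_{j \in \mathcal{N}(i)} \flow_k(i,j) = d_k[i]$ together with the antisymmetry $\flow_k(i,j) = -\flow_k(j,i)$ (which is consistent with the prescribed boundary values since $W_{i,j}=W_{j,i}$, and matches the Kleinberg--Tardos flow framework cited), I would rewrite
\[
\sum_{i \in \nodes} z_k[i]\, d_k[i] = \sum_{\{i,j\} \in \edges} \flow_k(i,j)\big(z_k[i] - z_k[j]\big),
\]
where each unordered edge contributes an orientation-independent term. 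This converts the node-side demands into an edge-side pairing of the flow against the signal differences.

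Next I would split the edge set $\edges = \partial \partition \cup \compbound$ in this identity and isolate the boundary sum. On $\partial \partition$ the prescribed value $\flow_k(i,j) = \mathrm{sign}(z_k[i]-z_k[j])\, L\, W_{i,j}$ makes each term collapse to $L W_{i,j}\,|z_k[i]-z_k[j]|$, so the boundary edges produce exactly $L \sum_{\{i,j\} \in \partial \partition} W_{i,j}\,|z_k[i]-z_k[j]|$. The two remaining contributions are bounded from above: the demand term by $\sum_{i \in \nodes} z_k[i]\, d_k[i] \leq K \sum_{i \in \samplingset} |z_k[i]|$ (using $|d_k[i]| \leq K$ on $\samplingset$ and $d_k[i]=0$ elsewhere), and the complement term by $\sum_{\{i,j\} \in \compbound} |\flow_k(i,j)|\,|z_k[i]-z_k[j]| \leq \sum_{\{i,j\} \in \compbound} W_{i,j}\,|z_k[i]-z_k[j]|$ (using the capacity bound $|\flow_k(i,j)| \leq C_{i,j} = W_{i,j}$ on $\compbound$). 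Rearranging yields the scalar inequality $L \sum_{\{i,j\} \in \partial \partition} W_{i,j}\,|z_k[i]-z_k[j]| \leq K \sum_{i \in \samplingset}|z_k[i]| + \sum_{\{i,j\} \in \compbound} W_{i,j}\,|z_k[i]-z_k[j]|$.

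Finally I would sum this scalar inequality over $k = 1,\ldots,\sigdim$ and convert the coordinatewise $\ell_1$ quantities into the vector $\ell_2$ semi-norms of \eqref{equ_ineq_multcompcondition_condition}. The direction of each norm comparison must be chosen so the constants line up: on the boundary (left) side I use $\sum_k |z_k[i]-z_k[j]| = \|\vz[i]-\vz[j]\|_1 \geq \|\vz[i]-\vz[j]\|_2$ to lower-bound by $L\,\|\vz[\cdot]\|_{\partial \partition}$, whereas on the sampling-set and complement (right) side I use the opposite estimate $\|\cdot\|_1 \leq \sqrt{\sigdim}\,\|\cdot\|_2$ to upper-bound by $K\sqrt{\sigdim}\sum_{i \in \samplingset}\|\vz[i]\|_2 + \sqrt{\sigdim}\,\|\vz[\cdot]\|_{\compbound}$. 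Dividing through by $\sqrt{\sigdim}$ produces \eqref{equ_ineq_multcompcondition_condition} exactly, so $\samplingset$ satisfies the NCC with the stated $K,L$.

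I expect the summation-by-parts identity to be the crux: getting the antisymmetry and orientation bookkeeping right so that the node-side demands translate cleanly into edge-side signal differences. The remainder is careful but routine, the one genuine subtlety being that the two opposite norm equivalences must be applied to the correct terms, since this directional consistency is precisely what generates the $L/\sqrt{\sigdim}$ factor appearing in the NCC.
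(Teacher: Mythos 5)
Your proposal is correct and follows essentially the same route as the paper's proof: a summation-by-parts identity converting node demands into edge-wise pairings of the flow against signal differences, collapse of the boundary terms via the sign prescription, the capacity bound on $\compbound$, and the two opposite norm comparisons ($\ell_1 \geq \ell_2$ on the boundary side, $\ell_1 \leq \sqrt{\sigdim}\,\ell_2$ on the sampling-set and complement side) to produce the $L/\sqrt{\sigdim}$ factor. The only difference is presentational — you start from the divergence identity and isolate the boundary, while the paper starts from $L\|\vz[\cdot]\|_{\partial \partition}$ and expands over $\edges \setminus \compbound$ — but the argument is the same.
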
 
\begin{proof} 
For a graph signal $\vz[\cdot]$ we denote, for each edge $\{i,j\} \in \partial \partition$, 
\begin{equation}
\label{equ_orientation_boundary}
b_k{(i,j)} \!=\! - b_k{(j,i)} \!=\! {\rm sign} ( z_k[i] \!-\! z_k[j]) \mbox{ for each } k \in [\sigdim]. 
\end{equation} 
Consider flows $\flow_k(i,j)$ on the data graph $\graph$ satisfying
\begin{align} 
\hspace{-5mm}
\label{equ_condition_flow_1}
\sum_{j \in \mathcal{N}(i)} \flow_k(i,j)   = 0  \mbox{ for all } i \notin \samplingset &\mbox{, } \hspace{0.5cm}
\bigg| \sum_{j \in \mathcal{N}(i)} \flow_k(i,j) \bigg|  \leq  K  \mbox{ for all } i \in \samplingset \\
|\flow_k(i,j)|  \leq W_{i,j}   \mbox{ for all } \{i,j\} \in \compbound  
&\mbox{, } \hspace{0.5cm} \flow_k(i,j) =  b_k{(i,j)} L W_{i,j}    \mbox{ for all } \{i,j\} \!\in\! \partial \partition. 
\label{equ_condition_flow_4}
\end{align} 
This yields, in turn, 
\begin{align}
L \| z[\cdot] \|_{\partial \partition} & = \sum_{\{i,j\}  \in \partial \partition} \|z[i] - z[j] \|_2 L W_{i,j} \leq \sum_{\{i,j\}  \in \partial \partition} \|z[i] - z[j] \|_1 L W_{i,j} \nonumber \\ 
& \hspace{-3mm} \stackrel{\eqref{equ_orientation_boundary},\eqref{equ_condition_flow_4}}{=} \sum_{k \in [\sigdim]} \sum_{\{i,j\}  \in \partial \partition} (z_k[i]\!-\!z_k[j])h_k(i,j). 
\label{equ_inequ_nnsp_single_cluster}
\end{align}
Since $\partial \partition = \edges \setminus \compbound$, developing  \eqref{equ_inequ_nnsp_single_cluster} yields
\begin{align}
L \| z[\cdot] \|_{\partial \partition} &\leq \sum_{k \in [\sigdim]} \bigg( \sum_{\{i,j\}  \in \edges} (z_k[i]\!-\!z_k[j])h_k(i,j) - \sum_{\{i,j\}  \in \compbound} (z_k[i]\!-\!z_k[j])h_k(i,j) \bigg) \nonumber \\
& \hspace*{-0mm} \stackrel{}{=} \sum_{i  \in \nodes} \sum_{k \in [\sigdim]} z_k[i] \sum_{j \in \mathcal{N}(i)} \flow_k(i,j) -  \sum_{\{i,j\}  \in \compbound} \sum_{k \in [\sigdim]} (z_k[i]\!-\!z_k[i])h_k(i,j).
\label{equ_inequ_nnsp_nodes_decomposed}
\end{align}
Applying \eqref{equ_condition_flow_1},\eqref{equ_condition_flow_4} into \eqref{equ_inequ_nnsp_nodes_decomposed} yields further
\begin{align*}
L \| z[\cdot] \|_{\partial \partition}  \!\stackrel{\eqref{equ_condition_flow_1},\eqref{equ_condition_flow_4}}{\leq}\!  \!K\!  \sum_{i  \in \samplingset}  \|z[i]\|_1 \!+\hspace*{-3mm}\sum_{\{i,j\}  \in \compbound} \hspace*{-3mm}W_{ \{i,j\} } \| z[i]\!-\!z[j]\|_1   \leq  \sqrt{p} (K  \sum_{i  \in \samplingset}   \|z[i]\|_2 +    \| z[\cdot] \|_{\compbound} ).
\vspace*{-3mm}
\end{align*}
Thus, the condition \eqref{equ_ineq_multcompcondition_condition} is verified.
\end{proof} 


\bibliographystyle{plainnat}
\bibliography{SLPBib}

\end{document}